 \newtheorem{definition}{Definition}
 \newtheorem{example}{Example}
 \newtheorem{proposition}{Proposition}
 \newtheorem{lemma}{Lemma}
 \newtheorem{corollary}{Corollary}
\newcommand{\AS}{\mathcal{AS}}
\newcommand{\A}{\mathcal{A}}
\newcommand{\Arg}{\mathbb{A}}
\newcommand{\Att}{\mathbb{C}}
\newcommand{\C}{\mathcal{C}}
\newcommand{\Datt}{\mathbb{D}}
\newcommand{\Ext}{Ext}
\newcommand{\ag}{Ag}
\newcommand{\ac}{Ac}
\newcommand{\av}{Av}
\newcommand{\ou}{Ou}
\newcommand{\ef}{Ef}
\newcommand{\pref}{\leq}
\newcommand{\Strat}{S}
\newcommand{\stackedpayoffs}[2]{\begin{array}{cc}  & #1 \\[1.6mm]  #2\phantom{-}&\end{array} }
\begin{document}
\pagestyle{headings}
\begin{frontmatter}

\title{Representing Pure Nash Equilibria in Argumentation}
\author[A]{\fnms{Bruno} \snm{Yun}
\thanks{Corresponding Author: E-mail: bruno.yun@abdn.ac.uk}},
\author[B]{\fnms{Srdjan} \snm{Vesic}}
and
\author[A]{\fnms{Nir} \snm{Oren}}

\runningauthor{B.~Yun et al.}
\address[A]{Department of Computing Science, University of Aberdeen, Scotland}
\address[B]{CRIL Lens, University of Artois, France}




\begin{abstract}
In this paper we describe an argumentation-based representation of normal form games, and demonstrate how argumentation can be used to compute pure strategy Nash equilibria. Our approach builds on Modgil's Extended Argumentation Frameworks. We demonstrate its correctness, prove several theoretical properties it satisfies, and outline how it can be used to explain why certain strategies are Nash equilibria to a non-expert human user.
\keywords{Argumentation \and Game Theory}
\end{abstract}
\begin{keyword}
Argumentation \sep Game theory \sep Nash equilibrium \sep Pure strategy
\end{keyword}

\end{frontmatter}

\section{Introduction}
Game theory studies how multiple rational decision-makers should act given interactions between their strategies, and preferences over the resultant outcomes. Game theory has been applied to myriad fields \cite{matsumoto2016GameTheoryIts}. Within game theory, decision-makers (referred to as players), their strategies, preferences and outcomes are represented within a game, and the solutions to a game identify some form of rational outcome. One such solution concept is that of a \emph{dominant} strategy, where a player has a strategy or a set of strategies that will always result in the best outcome for them, regardless of what other players do. However, such dominant strategies often do not exist.
In this work, we consider instead the notion of a \emph{Nash equilibrium}, which identifies optimal strategies given that other players also pursue their own optimal strategies. Such Nash equilibria therefore represent a form of best response, and provide a well understood solution concept in game theory. However, finding Nash equilibria is computationally difficult, and it is sometimes difficult for a non-expert to understand why a given strategy is (or is not) a Nash equilibrium. We believe that by providing an argumentation-based representation of games, dialogues can be used to explain a Nash equilibrium to such non-experts. While work such as \cite{fanInterplayGamesArgumentation2016} has considered game theory in the context of ABA, to our knowledge, this work is the first to link abstract argumentation and Nash equilibria. We consider only so-called \emph{pure strategies} for \emph{normal form games} and intend to relax this restriction in future work.

The remainder of the paper is structured as follows. In Section \ref{sec:background}, we provide a brief overview of argumentation and game-theory concepts necessary to understand our article. In Section \ref{sec:formalism}, we describe how a normal form game can be encoded using argumentation. Section \ref{sec:properties} examines some formal properties of our approach. Lastly, we discuss related and future work in Section \ref{sec:discussion} before concluding.

\section{Background}\label{sec:background}

We begin by providing the necessary background in game theory and argumentation required for the rest of the paper.

\subsection{Game Theory}

In this paper, we use the usual \emph{normal form} for games \cite{2009MartinOsborneIntroduction}. 

\begin{definition}\textbf{(Normal Game)}
  A (normal) game is $G = (\ag,\ac, \av, \ou, \ef, \pref)$ where $\ag = \{0, 1, \dots, n\}$ is a finite set of players; $\ac$ is a finite set of strategies; $\av = [\ac_0, \dots, \ac_n ]$ with $\ac_i \subseteq \ac$ denoting the strategies available to $i$; $\ou = \{o_0, \dots, o_m\}$ is a set of possible outcomes; $\ef:\ac^{n} \to \ou^{n}$ captures the consequences of the joint strategies for each player; and $\pref = [ \pref_0, \dots, \pref_n]$ with $\pref_i \subseteq \ou \times \ou$ denoting the preference relation for player $i$.
\end{definition}

The notation $o_k \pref_i o_l$ means that player $i$ prefers outcome $o_l$ to $o_k$. As commonly done, we write $o_i <_i o_j$ iff $o_i \leq_i o_j$ and $o_j \not\leq_i o_i$\footnote{We assume that our preferences are acyclic. I.e., if $a <_i b <_i c$ then $c \not\pref_i a$.}.
A \emph{pure strategy profile} $\Strat$ is a tuple containing one strategy from each player in the game. The set of all such pure strategy profiles is $\Strat_G=\Pi_{i \in \ag} \ac_i$, and represents one joint strategy of all players. 
A \emph{partial strategy profile} is a tuple containing a single strategy for a subset of the players.
Given any pure strategy profile $\Strat=[s_0, \ldots, s_n]$, we write $\Strat_{-i}$ to denote the \emph{partial strategy profile} $[s_0, \ldots, s_{i-1}, \emptyset, s_{i+1}, \ldots, s_n]$, where the strategy for player $i$ is not specified. We then write $\Strat_{-i} \oplus s_i$ to denote strategy profile $\Strat$.
With a slight abuse of notation, for any $\Strat, \Strat' \in \Strat_G$ we write that $S   \leq_i S'$ iff $\ef(S)_i \leq_i \ef(S')_i$\footnote{The notation $\ef(S')_i$ means the i-th element of $\ef(S')$.}.

\begin{example}
Let us consider the stag hunt game $G = (\{0,1\}, \ac, \av, \ou, \ef, \pref)$, where $\ac = \{ stag, hare\}$, $\av = [\ac, \ac]$, $\ou =\{ 4, 3, 2, 1 \}$, $\pref$ is the standard less than relation over numbers. Table \ref{tab:stag} graphically illustrates this game in normal form, and specifies $\ef$. For example, the tuple $(1,3)$ in the column ``hare'' and row ``stag'' means that $\ef([stag,hare]) = (1,3)$.
Given the pure strategy profile $\Strat = [stag,hare]$, $\Strat_{-0} = [\emptyset, hare]$ and $\Strat_{-0} \oplus hare = [hare,hare]$.
Here $[stag,hare] \leq_0 [hare,hare]$ because $(1,3)_0 \leq_0 (2,2)_0$ but $ [hare,hare] \leq_1 [stag,hare]$.

\begin{table}[t]
\centering
\begin{subtable}{0.4\textwidth}
\begin{game}{2}{2}[Player~0][Player~1]
& $stag$ & $hare$ \\
$stag$ &$\stackedpayoffs{4}{4}$ &$\stackedpayoffs{3}{1}$\\
$hare$ &$\stackedpayoffs{1}{3}$ &$\stackedpayoffs{2}{2}$
\end{game}
\caption{Stag Hunt} \label{tab:stag}
\end{subtable}
\begin{subtable}{0.4\textwidth}
 \begin{game}{2}{2}[Player~0][Player~1]
 & $heads$ & $tails$ \\
 $heads$ &$\stackedpayoffs{-1}{1}$ & $\stackedpayoffs{1}{-1}$\\
$tails$ &$\stackedpayoffs{1}{-1}$  &$\stackedpayoffs{-1}{1}$
 \end{game}
 \caption{Matching pennies.}
 \label{tab:noNash}
 \end{subtable}
\caption{Two games in normal form.}
\end{table}

\label{ex:stag1}
\end{example}

In asking why a player should pursue a some strategy, we must take into account the strategies of others. 
A \textit{Nash equilibrium} is the best response a player can make given optimal play by all other players.

\begin{definition}
Let $G= (\ag,\ac,\av, \ou, \ef, \pref)$, we say that $\Strat \in \Strat_G$ is a Nash equilibrium if for every $ i \in \ag$ and for any strategy $s \in \ac_i$, it holds that $\Strat_{-i} \oplus s \pref_i  \Strat$.

\end{definition}

A simple algorithm to identify all Nash equilibrium in the presence of pure strategies involves iterating through every player and identifying the best strategy profile (in terms of $\ef$ for that player) given all other players' possible joint strategies. Any strategy profile which all players consider best is then a Nash equilibrium.

Given a game in normal form, the above algorithm involves -- for a two player game --  scanning down each column and marking the best strategy for the row player, and then doing the same for each row marking the best strategy for the column player. Each cell marked for both players is a Nash equilibrium.
In the remainder of this paper, we show an argumentation-based alternative.

\begin{example}[Cont'd]
There are two Nash equilibria in the stag hunt game: $[stag,stag]$ and $[hare,hare]$. The strategy profile $[stag,stag]$ is a Nash equilibrium because $[ hare, stag] \leq_0 [stag, stag]$ and $[stag, hare] \leq_1 [stag,stag] $.
Similarly, $[hare,hare]$ is also a Nash equilibrium as $[stag,hare] \leq_0 [hare,hare] $ and $[hare,stag] \leq_1 [hare,hare]$.
\end{example}

\subsection{Argumentation}

We encode normal form games in terms of arguments and attacks by building on Modgil's Extended Argumentation Frameworks (EAF) \cite{modgil2009ReasoningPreferencesArgumentation}.

\begin{definition}
An Extended Argumentation Framework is a triple $\langle \Arg , \Att, \Datt \rangle$ where $\Arg$ is a set of arguments, $\Att \subseteq \Arg \times \Arg$, $\Datt \subseteq \Arg \times \Att$ and if $(z,(x,y)), (z',(y,x)) \in \Datt$ then $(z,z'), (z',z)  \in \Att$.
\end{definition}

\begin{definition}[Defeat]

Let $\AS = (\Arg, \Att, \Datt)$ be an EAF, $x,y \in \Arg$ and $Y \subseteq \Arg$. We say that $y$ defeats $x$ w.r.t.\ $Y$, denoted $y \to_Y x$ iff $(y,x) \in \Att$ and there is no $z \in Y$ s.t.\ $(z,(y,x)) \in \Datt$.
\end{definition}

\begin{definition}[Argumentation semantics]
Let $\AS = (\Arg,\Att, \Datt)$ be an EAF and $E \subseteq \Arg$. We say that:
\begin{itemize}
    \item $E$ is conflict-free iff for every $x,y \in E$, if $(y,x) \in \Att$ then $(x,y) \not\in \Att$, and there exists $z \in E$ s.t.\ $(z,(y,x)) \in \Datt$.
    \item $x \in \Arg$ is acceptable w.r.t.\ $E$ iff for every $y \in \Arg$ s.t.\ $y \to_E x$, there exists $z \in E$ s.t.\ $z \to_E y$ and there exists $R_E = \{ x_1 \to_E y_1, \dots , x_n \to_E y_n\}$ s.t.\ for every $i\in \{1, \dots, n \}, x_i \in E$, $z \to_E y \in R_E$ and for every $x_j \to_E y_j \in R_E$, for every $y'$ s.t.\ $(y',(x_j,y_j)) \in \Datt$, there exists $x' \to_E y'  \in R_E$
    \item $E$ is an admissible extension iff every argument in $E$ is acceptable w.r.t.\ $E$
    \item $E$ is a preferred extension iff $E$ is a maximal (w.r.t.\ $\subseteq$) admissible extension
    \item $E$ is a stable extension iff for every $y \notin E$, there exists $x \in E$ such that $x \to_E y$.
\end{itemize}
\end{definition}

We will use the notation $\Ext_s(\AS)$ (resp. $\Ext_p(\AS)$) to denote the set of all stable (resp.\ preferred) extensions.

\section{Argumentation-based approach for games} \label{sec:formalism}

We consider an argumentation framework with multi-level arguments. At the base level, we consider all possible strategy profiles as arguments. Since only a single strategy profile can ever occur (as players execute one set of strategies in the interaction), every argument at this level must attack every other argument. We refer to such arguments as \emph{game-based arguments}, and note that they are equivalent to pure strategy profiles.

\begin{definition}[Game-based argument]
Let $G = (\ag,\ac, \av,\ou, \ef, \pref)$ be a game, a game-based argument (w.r.t.\ $G$) is a pure strategy profile $\Strat \in \Strat_G$.
\label{def:game-args}
\end{definition}

\noindent The set of all game-based arguments for a game $G$ is denoted by $\A_g(G)$.

Next, we introduce \emph{preference arguments}. Intuitively, these can be interpreted as statements of the form: ``Given that the other players are performing a given set of strategies, the remaining player's preferred strategy should be playing $x$''.

\begin{definition}[Preference argument]
Let $G = (\ag,\ac,\av, \ou, \ef, \pref)$ be a game, $\Strat \in \Strat_G$ be a pure strategy profile and $i \in \ag$. A preference argument  (w.r.t.\ $G$) is a tuple $(\Strat_{-i},s)$, where $s \in \ac_i$.
\label{def:pref-args}
\end{definition}

\noindent The set of preference arguments for a game $G$ is denoted by $\A_p(G)$. 
A \emph{cluster} of preference arguments is a maximal set of preference arguments sharing the same partial strategy profile.
%
%

Finally, we introduce \emph{valuation arguments}, which can be interpreted as statements of the form: ``Given that the other players are performing a given set of strategies, it is the case that the outcome of strategy $s$ is better than the outcome of strategy $s'$ for the remaining player''.

\begin{definition}[Valuation argument]
Let $G = (\ag,\ac,\av, \ou, \ef, \pref)$ be a game, $i \in \ag$, $(\Strat_{-i},s), (\Strat_{-i},s') \in \A_p(G)$ be two preference arguments and $\Strat_{-i} \oplus s'  <_i \Strat_{-i} \oplus s$. A valuation argument (w.r.t.\ $G$) is the pair $(\Strat_{-i}, s' < s)$. 
\end{definition}

\noindent The set of valuation arguments for a game $G$ is denoted by $\A_v(G)$.

\begin{example}[Cont'd]
\label{ex3-stag-hare}
The sets of game-based, preference and valuation arguments w.r.t. $G$ are shown in Table \ref{tab:args-stag}.
The argument $a_1$ represents the case where player $0$ chooses to hunt a stag and player $1$ chooses to hunt a hare. The argument $a_9$ represents the argument: ``Given that player $0$ chooses to hunt a hare, player $2$'s preferred strategy should be to hunt a stag''.
The argument $a_{16}$ represents the argument: ``Given that player $1$ chooses to hunt a hare, the outcome of hunting a hare is better than the outcome of hunting a stag for player $0$''.

\begin{table}
    \centering
\begin{tabular}{|c|c|c|}
\hline
Game-based arguments & Preference arguments & Valuation arguments\\
\hline
$a_1 = [stag,hare]$ &$a_5 = ([stag,\emptyset], stag)$ & $a_{13} = ([stag,\emptyset], stag > hare)$ \\
$a_2 = [stag,stag]$ & $a_6 = ([stag,\emptyset], hare)$& $a_{14} = ([\emptyset,stag], stag > hare)$ \\
$a_3 = [hare, stag]$ & $a_7 = ([\emptyset,stag], stag)$& $a_{15} = ([hare,\emptyset], hare > stag)$ \\
$a_4 = [hare, hare]$ & $a_8 = ([\emptyset,stag], hare)$& $a_{16} = ([\emptyset, hare], hare > stag)$. \\
& $a_9 = ([hare,\emptyset], stag)$& \\
&$a_{10} = ([hare,\emptyset], hare)$& \\
&$a_{11} = ([\emptyset,hare], stag)$&\\
&$a_{12} = ([\emptyset,hare], hare)$ & \\
\hline
\end{tabular}
    
    \caption{Arguments for the stag hunt game}
    \label{tab:args-stag}
\end{table}
\end{example}

We now turn our attention to attacks. We note that preference and valuation arguments provide reasons why one argument should not attack another, and therefore introduce not only attacks between arguments, but also attacks on attacks.

\begin{definition}[Attack]
\label{defintion-of-attack}
For a game $G = (\ag,\ac,\av, \ou, \ef, \pref)$, $a_1, a_2 \in \A_g(G)$, $a_3 = (\Strat_1,s_2) , a_4= (\Strat_3,s_4) \in \A_p(G)$ and $a_5 = (\Strat_5, s_6 > s_7) \in \A_v(G)$. We say that:

\begin{itemize}
    \item $a_1$ attacks $a_2$, denoted $(a_1, a_2) \in \C_r(G)$, iff $a_1 \neq a_2$.
    
    \item $a_3$ attacks $a_4 $, denoted $(a_3, a_4) \in \C_p(G)$, iff $\Strat_1 = \Strat_3$ and $s_2 \neq s_4$.
    
    \item $a_3 $ attacks $(a_1, a_2) \in \C_r(G)$, denoted by $(a_3, (a_1,a_2)) \in \C_u(G)$, iff there exists $s \in \ac$ such that $\Strat_1 \oplus s =  a_1$ and $\Strat_1 \oplus s_2 = a_2$. 
    
    \item $a_5$ attacks $(a_3, a_4) \in \C_p(G)$, denoted by $(a_5, (a_3,a_4)) \in \C_v(G)$, iff $\Strat_5 = \Strat_3$, $s_6 = s_4$ and $s_7 = s_2$.
\end{itemize}
\label{def:attacks}
\end{definition}

The first attack captured within Definition \ref{def:attacks} is between every two distinct game-based arguments. As each player has to choose exactly one strategy, different strategy profiles are clearly incompatible. 
The second bullet point represents attacks between preference arguments. In the stag hunt example for instance, $a_5$ attacks $a_6$ (and vice-versa) because in the event of player $0$ hunting a stag, player $1$ can either hunt a stag or a hare.
The third type of attack captures attacks from preference arguments to attacks between game-based arguments. Within the stag hunt, $a_5$ attacks $(a_1, a_2)$ because $a_5$ states that it is preferable for player $1$ to hunt a stag when player $0$ is also hunting a stag.
Note that in general, the preference argument $(\Strat_1,s_2)$ attacks \emph{all} attacks against the game-based argument $\Strat_1 \oplus s_2$ coming from any other game-based arguments of the form $\Strat_1 \oplus s'$, for any $s' \in \ac$ such that $s' \neq s_2$.
The last type of attack captures  attacks from valuation arguments to attacks between preference arguments.
Returning to the stag hunt, $a_{13}$ attacks $(a_6, a_5)$ as $a_{13}$ states that the strategy ``hunt a stag'' is better than the strategy ``hunt a hare'' for player $1$ when player $0$ is hunting a stag.

The arguments and attacks induce a very specific type of extended argumentation framework, where object-level (game-based) arguments have their attacks attacked by meta-arguments (preference arguments) at level one, and where attacks between these meta-arguments are attacked by meta-arguments at level two (valuation arguments).

\begin{definition}[Argumentation framework]
Let $G$ be a game. The argumentation framework corresponding to $G$ is the tuple $\AS_G = (\Arg, \Att, \Datt)$ where $\Arg = \A_g(G) \cup \A_p(G) \cup \A_v(G)$, $\Att = \C_r(G) \cup \C_p(G)$ and $\Datt= \C_u(G) \cup \C_v(G)$.

\label{def:instantiation}
\end{definition}

\begin{example}[Example \ref{ex3-stag-hare} Contd]
Figure \ref{fig:GPd} represents the game-based, preference and valuation arguments of $G$ using blue, yellow and green nodes respectively. The attacks between arguments ($\Att$) and on attacks ($\Datt$) are represented using solid black arrows and dashed red arrows respectively.

\begin{figure}
    \centering
    \includegraphics[width=0.5\textwidth]{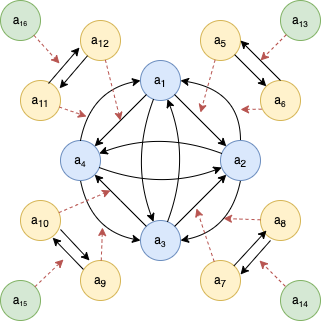}
    \caption{Argumentation graph corresponding to stag hunt game}
    \label{fig:GPd}
\end{figure}
\label{ex-fig-AS}
\end{example}

For our framework to be an EAF, it must satisfy some constraints, as described in \cite{modgilLabellingsGamesExtended}, and we can easily show that this is the case.


\begin{proposition}
Let $G$ be a game and $\AS_G = (\Arg, \Att, \Datt)$ be the corresponding argumentation framework, it holds that if $(z,(x,y)), (z',(y,x)) \in \Datt$ then $(z,z'), (z',z)  \in \Att$.
\label{prop:EAFcondition}
\end{proposition}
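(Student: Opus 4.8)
The plan is to prove the contrapositive-free statement directly by a case analysis on the \emph{type} of the attack $(x,y)$. The key preliminary observation is that the three argument sets $\A_g(G)$, $\A_p(G)$, $\A_v(G)$ are pairwise disjoint, since their elements are tuples of syntactically distinct shapes (a strategy profile; a pair ``partial profile, strategy''; a pair ``partial profile, strict comparison''). Consequently, for any $(a,b)\in\Att=\C_r(G)\cup\C_p(G)$, whether $(a,b)$ lies in $\C_r(G)$ or in $\C_p(G)$ is determined by the common type of its endpoints; in particular $(x,y)$ and $(y,x)$ are always of the same kind, so exactly one of the two cases below applies to both. Moreover, since every element of $\C_u(G)$ attacks a pair in $\C_r(G)$ and every element of $\C_v(G)$ attacks a pair in $\C_p(G)$, once we know the type of $(x,y)$ the hypotheses $(z,(x,y)),(z',(y,x))\in\Datt$ pin down whether these $\Datt$-edges come from $\C_u(G)$ or from $\C_v(G)$.

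\emph{Case 1: $(x,y),(y,x)\in\C_r(G)$.} Then $x,y\in\A_g(G)$ with $x\neq y$, and $(z,(x,y)),(z',(y,x))\in\C_u(G)$. I would unfold Definition~\ref{def:attacks}: write $z=(\Strat_{-i},s_2)$, $z'=(\Strat'_{-j},s'_2)$, and obtain $s,s'\in\ac$ with $\Strat_{-i}\oplus s=x$, $\Strat_{-i}\oplus s_2=y$, $\Strat'_{-j}\oplus s'=y$, $\Strat'_{-j}\oplus s'_2=x$. The first pair of equalities says $x$ and $y$ agree in every coordinate except possibly $i$; the second says they agree except possibly $j$. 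Since $x\neq y$, this forces $i=j$, and then $\Strat_{-i}$ and $\Strat'_{-j}$ are both the projection of $x$ onto the coordinates other than $i$, hence equal; also $s_2=y_i\neq x_i=s'_2$. So $z$ and $z'$ are preference arguments with the same partial profile and distinct strategies, whence $(z,z'),(z',z)\in\C_p(G)\subseteq\Att$, as required.

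\emph{Case 2: $(x,y),(y,x)\in\C_p(G)$.} Then $x=(\Strat_{-i},s_b)$, $y=(\Strat_{-i},s_d)\in\A_p(G)$ with $s_b\neq s_d$ (same partial profile because each attacks the other), and $(z,(x,y)),(z',(y,x))\in\C_v(G)$. Unfolding the last bullet of Definition~\ref{def:attacks} gives $z=(\Strat_{-i},s_d>s_b)$ and $z'=(\Strat_{-i},s_b>s_d)$; but a valuation argument $(\Strat_{-i},s_d>s_b)$ exists only when $\Strat_{-i}\oplus s_b<_i\Strat_{-i}\oplus s_d$, and $(\Strat_{-i},s_b>s_d)$ only when $\Strat_{-i}\oplus s_d<_i\Strat_{-i}\oplus s_b$. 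These two strict inequalities are incompatible with the definition of $<_i$ (which, via $o<_i o'\Rightarrow o'\not\leq_i o$, rules out a $2$-cycle), so this case is vacuous and the implication holds trivially.

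I do not anticipate a genuine obstacle: the statement is really a bookkeeping check that the EAF well-formedness condition survives the encoding. The one place that needs care is the coordinate-matching in Case~1 — arguing from $x\neq y$ alone that the two preference arguments $z,z'$ must speak about the \emph{same} player and the \emph{same} partial profile — and the only mildly non-obvious point is recognising that Case~2 cannot occur at all, which follows from anti-symmetry of the strict preference.
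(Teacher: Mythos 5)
Your proof is correct and follows essentially the same route as the paper: a case split on the two kinds of attacks-on-attacks, showing the valuation-argument case is vacuous (by antisymmetry of $<_i$) and that in the preference-argument case the two attackers share a partial profile with distinct strategies, hence mutually attack in $\C_p(G)$. Your Case~1 is in fact slightly more careful than the paper's, which simply posits that the two preference arguments have the same partial strategy profile, whereas you derive this from $x\neq y$ via the coordinate-matching argument.
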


\begin{proof}
There are only two types of attacks on attacks: (1) attacks coming from valuation arguments to attacks between preference arguments and (2) attacks coming from preference arguments to attacks between game-based arguments.
In the rest of this proof, we prove that Proposition \ref{prop:EAFcondition} is satisfied for the two types of attacks on attacks.
\begin{itemize}
    \item 
Considering $(1)$, for a fixed partial strategy profile $\Strat_i$, and fixed strategies $s_j, s_k \in \ac$, there is exactly one (or no) valuation argument of the form $(\Strat_i, s_j > j_k)$ or $(\Strat_i, s_k > s_j)$. As a result, the condition in Proposition \ref{prop:EAFcondition} is trivially satisfied for attacks coming from valuation arguments.

\item We now study the case $(2)$ and show that Proposition \ref{prop:EAFcondition} is also satisfied for attacks coming from preference arguments on attacks between game-based arguments. 
Assume that $(a_3,(x,y)),$ $(a_4,(y,x)) \in \Datt$, where $a_3 = (\Strat_1,s_2),$ $a_4 = (\Strat_1, s_4), x = \Strat_1 \oplus s_4$ and $y = \Strat_1 \oplus s_2$. 
By Definition \ref{defintion-of-attack}, $s_2 \neq s_4$ thus $(a_3,a_4), (a_4, a_3) \in \C_p(G) \subseteq \Att$.
\end{itemize}

\end{proof}

\noindent Since -- given Proposition \ref{prop:EAFcondition} -- our argumentation system is an EAF, we can use EAF semantics to evaluate it.

\begin{example}[Example \ref{ex-fig-AS} Contd]
In our running example, $a_5$ defeats $a_6$ w.r.t.\ $\Arg$ as $(a_5,a_6) \in \Att$ and there is no argument $z \in \Arg$ such that $(z,(a_5,a_6)) \in \Datt$. However, $a_6$ does not defeat $a_5$ w.r.t.\ $\Arg$ because $(a_{13},(a_6,a_5)) \in \Datt$.
All extensions contain arguments $\{a_{16},a_{15},a_{14},a_{13},a_{12},a_{10},{a_7},a_{5}\}$, while one preferred extension contains $\{a_2\}$ and the other contains $\{a_4\}$.
\end{example}

\section{System Properties}\label{sec:properties}

Having described our system, we now consider its properties. The most important result we seek to show is the correspondence between argumentation semantics and Nash equilibria, and we begin by laying the groundwork for this. We then consider how many arguments will be generated for an arbitrary normal form game.

We begin by considering which preference arguments will appear in a preferred extension. This result is used in later proofs.

\begin{lemma}
Let $G = (\ag,\ac,\av, \ou, \ef, \pref)$ be a game, and $\AS_G$ be the corresponding AS. For each preferred extension $E$ of $\AS_G$, for each cluster $C$ of preference arguments, there exists a unique argument $c \in C$ such that $c \in E$.
\label{lem:1}
\end{lemma}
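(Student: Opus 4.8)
The plan is to establish two facts for each cluster $C$ of preference arguments and each preferred extension $E$: (i) at least one argument of $C$ lies in $E$, and (ii) at most one does. Throughout, fix a cluster $C$ associated with partial strategy profile $\Strat_{-i}$, so $C = \{(\Strat_{-i}, s) : s \in \ac_i\}$. The arguments in $C$ are pairwise conflicting via $\C_p(G)$, so (ii) should follow quickly; the work is in (i), which requires showing $E$ cannot be "empty" on $C$ and still be admissible-maximal.

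\textbf{Step 1: at most one argument of $C$ in $E$.} I would argue by contradiction. Suppose $(\Strat_{-i}, s), (\Strat_{-i}, s') \in E$ with $s \neq s'$. By Definition~\ref{def:attacks} (second bullet), both $((\Strat_{-i},s),(\Strat_{-i},s'))$ and $((\Strat_{-i},s'),(\Strat_{-i},s))$ are in $\C_p(G) \subseteq \Att$. The only attacks on attacks between preference arguments come from valuation arguments (the $\C_v$ type). By conflict-freeness, there must exist $z \in E$ with $(z,((\Strat_{-i},s),(\Strat_{-i},s'))) \in \Datt$, i.e.\ $z$ is a valuation argument $(\Strat_{-i}, s' > s)$, \emph{and} simultaneously a $z' \in E$ with $z' = (\Strat_{-i}, s > s')$. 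But $\Strat_{-i} \oplus s' <_i \Strat_{-i} \oplus s$ and $\Strat_{-i} \oplus s <_i \Strat_{-i} \oplus s'$ cannot both hold since preferences are acyclic (antisymmetry of $<_i$). Contradiction; hence at most one argument of $C$ lies in $E$.

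\textbf{Step 2: at least one argument of $C$ in $E$.} This is the main obstacle. I would show that if $E \cap C = \emptyset$, then $E$ is not maximal admissible, by exhibiting an argument of $C$ that can be added while preserving admissibility. The natural candidate is a preference argument $(\Strat_{-i}, s^\ast)$ where $s^\ast$ is a $<_i$-maximal element among $\{ \Strat_{-i} \oplus s : s \in \ac_i\}$ (such a maximal element exists since $\ac_i$ is finite and $<_i$ is acyclic). The key points to verify: (a) every attacker of $(\Strat_{-i}, s^\ast)$ is another member $(\Strat_{-i}, s'')$ of $C$ (this is all $\C_p$ attacks, by the second bullet of Definition~\ref{def:attacks} — no other argument type attacks preference arguments); (b) for each such attacker $(\Strat_{-i}, s'')$, the valuation argument $v_{s''} = (\Strat_{-i}, s'' > s^\ast)$ \emph{exists} — this is exactly where maximality of $s^\ast$ is used, since $\Strat_{-i}\oplus s'' <_i \Strat_{-i}\oplus s^\ast$ when $s'' \ne s^\ast$ gives us the valuation argument — and $v_{s''}$ attacks the attack $((\Strat_{-i},s''),(\Strat_{-i},s^\ast))$; (c) each $v_{s''}$ is itself unattacked in $\AS_G$ (nothing attacks valuation arguments, by inspection of Definition~\ref{def:attacks}), hence belongs to every admissible and indeed every preferred extension; (d) the valuation arguments are conflict-free with everything, so $E \cup \{(\Strat_{-i},s^\ast)\}$ — noting the $v_{s''}$ are already in $E$ by (c) — is conflict-free and each of its members is acceptable. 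Then $(\Strat_{-i}, s^\ast)$ is defended by $E$ (its attacks are all disarmed by the $v_{s''} \in E$), contradicting maximality of $E$.

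\textbf{Remarks on the obstacle.} The delicate part of Step 2 is handling the acceptability condition of EAFs in Definition~\ref{def:attacks}'s semantics, which is more intricate than in ordinary AFs: one must produce the reinstatement set $R_E$. Here it is manageable because the relevant attacks form a shallow, well-structured hierarchy: preference arguments in $C$ attack each other, valuation arguments (unattacked, hence safely in $E$) kill those attacks, and valuation arguments have no incoming attacks at all, so $R_E$ can be taken empty or trivial. I expect the proof to lean on the "shape" observations already highlighted after Definition~\ref{def:attacks} — that valuation arguments are attack-free and that preference arguments are only attacked within their own cluster — and to use finiteness of $\ac_i$ together with acyclicity of $\pref_i$ to guarantee the maximal strategy $s^\ast$ exists. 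A subtlety to double-check: we need $v_{s''}$ to attack the attack \emph{from} $s''$ \emph{on} $s^\ast$; by the fourth bullet of Definition~\ref{def:attacks}, $(\Strat_{-i}, s'' > s^\ast)$ attacks $((\Strat_{-i},s^\ast),(\Strat_{-i},s''))$ — the roles of $s_6=s_4$ and $s_7=s_2$ must be matched carefully, so I would state the direction of each attack explicitly before concluding.
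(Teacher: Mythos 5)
Your overall strategy is the same as the paper's: pick a maximal strategy $s^*$ for the cluster, argue that $(\Strat_{-i},s^*)$ can always be added to an admissible set so that maximality of preferred extensions forces at least one cluster member in, and obtain uniqueness from the symmetric $\C_p$-attacks together with acyclicity. Step 1 is fine (indeed the first conjunct of the conflict-freeness definition already forbids two co-present cluster members, since they attack each other symmetrically, without any appeal to valuation arguments).

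There is, however, a concrete problem in Step 2(b). You justify membership of $(\Strat_{-i},s^*)$ by cancelling every \emph{incoming} attack with a valuation argument, which requires $\Strat_{-i}\oplus s'' <_i \Strat_{-i}\oplus s^*$ for every $s''\neq s^*$. A maximal element of a finite acyclic relation need not strictly dominate all competitors: if $s''$ and $s^*$ yield tied or incomparable outcomes for player $i$ --- as happens, e.g., in the three-strategy variant of Example \ref{ex:noNash2pref} --- then no valuation argument exists in either direction, the attack from $(\Strat_{-i},s'')$ on $(\Strat_{-i},s^*)$ is a genuine defeat, and step (b) produces nothing. The repair is the route the paper's own proof takes: by maximality of $s^*$ no valuation argument of the form $(\Strat_{-i}, s''>s^*)$ exists, so none of the \emph{outgoing} attacks from $(\Strat_{-i},s^*)$ can be cancelled; hence $(\Strat_{-i},s^*)$ defeats every attacker back with respect to any set, defends itself, and the reinstatement-set condition is vacuous because the defeats in $R_E$ have no attackers in $\Datt$. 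Relatedly, the direction you flag at the end is genuinely wrong as written: by the fourth bullet of Definition \ref{def:attacks}, the canceller of the attack from $(\Strat_{-i},s'')$ onto $(\Strat_{-i},s^*)$ is $(\Strat_{-i}, s^*>s'')$, whereas your $v_{s''}=(\Strat_{-i},s''>s^*)$ cancels the opposite attack, i.e.\ it would disarm $s^*$'s own defence. Finally, claim (d) needs one more line: in an EAF, enlarging $E$ can destroy defeats that arguments already in $E$ relied on for their defence. Here this is harmless, but only because $(\Strat_{-i},s^*)$ cancels solely attacks onto $\Strat_{-i}\oplus s^*$ coming from arguments of the form $\Strat_{-i}\oplus s$, and an admissible $E$ disjoint from $C$ cannot contain such a game-based argument in the first place (its defence against the rest of that cluster's row would fail the reinstatement-set condition).
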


\begin{proof}
Assume a partial strategy profile $\Strat = [s_0, \dots, s_{i-1}, \emptyset, s_{i+1}, s_n]$ and the corresponding cluster of preference arguments $C$.
Because our preferences are acyclic, we know that there exists a strategy $s^*$ such that for every $s \in \ac_i$, $\Strat \oplus s \leq_i \Strat \oplus s^*$. From the definition of the valuation argument, there are no valuation arguments attacking the attacks from the preference argument $(\Strat, s^*)$ to other preference arguments. As a result, we conclude that $(\Strat, s^*)$ is in a preferred extension $E$ and that all the other preference in $C$ are not $E$.
Moreover, you need to choose one of such arguments from the cluster $C$ for each preferred extension to satisfy the maximality condition of the semantics.
\end{proof}

Next, we show that if there is a preferred extension with game-based arguments, then each such extension has exactly one game-based argument.

\begin{lemma}
\label{lem:pref-gameb}
If any preferred extension of $\AS_G$ contains a game-based argument, then it contains exactly one game-based argument.
\end{lemma}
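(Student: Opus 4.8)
The plan is to argue by contradiction and reduce the statement to Lemma~\ref{lem:1}. Suppose some preferred extension $E$ of $\AS_G$ contained two distinct game-based arguments $a_1,a_2 \in \A_g(G)$. Since $a_1 \neq a_2$, Definition~\ref{def:attacks} gives $(a_1,a_2),(a_2,a_1) \in \C_r(G) \subseteq \Att$, so $E$ contains a pair of mutually attacking arguments. As $E$ is preferred it is admissible, hence conflict-free, so each of the attacks $(a_1,a_2)$ and $(a_2,a_1)$ must be attacked from within $E$: there exist $z,z' \in E$ with $(z,(a_1,a_2)) \in \Datt$ and $(z',(a_2,a_1)) \in \Datt$.

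Next I would identify $z$ and $z'$. Since $\Datt = \C_u(G)\cup\C_v(G)$ and only $\C_u(G)$ contains attacks aimed at attacks between game-based arguments, $z$ and $z'$ must be preference arguments, say $z=(\Strat_{-i},s_2)$ and $z'=(\Strat'_{-j},s_4)$. Unfolding the $\C_u(G)$ condition for $z$: there is an $s$ with $a_1 = \Strat_{-i}\oplus s$ and $a_2 = \Strat_{-i}\oplus s_2$, so $a_1$ and $a_2$ coincide on every player other than $i$; symmetrically, from $z'$ there is an $s'$ with $a_2 = \Strat'_{-j}\oplus s'$ and $a_1 = \Strat'_{-j}\oplus s_4$, so $a_1$ and $a_2$ coincide on every player other than $j$. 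If $i\neq j$, combining these forces $a_1=a_2$, a contradiction; hence $i=j$, and then both $\Strat_{-i}$ and $\Strat'_{-i}$ equal the restriction of $a_1$ (equivalently of $a_2$) to the players distinct from $i$, so $\Strat_{-i}=\Strat'_{-i}$. Thus $z$ and $z'$ lie in the same cluster $C$. Finally, since $a_1\neq a_2$ but they agree outside player $i$, their strategies for player $i$ differ; as $s_2$ is $a_2$'s strategy for player $i$ and $s_4$ is $a_1$'s strategy for player $i$, we get $s_2\neq s_4$, whence $z=(\Strat_{-i},s_2)\neq(\Strat_{-i},s_4)=z'$. So $C$ contains two distinct members of $E$, contradicting Lemma~\ref{lem:1}. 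This shows every preferred extension contains at most one game-based argument, which together with the hypothesis gives exactly one.

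I expect the main (though still routine) obstacle to be that middle step: extracting from mere membership in $\C_u(G)$ that $z$ and $z'$ are preference arguments over the \emph{same} partial strategy profile (same ``missing'' player and same fixed strategies), which is precisely what is needed to invoke Lemma~\ref{lem:1}. Note that the sub-case where $a_1$ and $a_2$ differ on two or more players requires no separate treatment, since then \emph{no} preference argument attacks $(a_1,a_2)$ and conflict-freeness of $E$ already fails. One could also avoid Lemma~\ref{lem:1} entirely: having identified $z,z'$ as distinct members of a common cluster, they mutually attack via $\C_p(G)$, so conflict-freeness would demand valuation arguments witnessing both $\Strat_{-i}\oplus s_2 <_i \Strat_{-i}\oplus s$ and $\Strat_{-i}\oplus s <_i \Strat_{-i}\oplus s_2$, contradicting acyclicity of $\pref_i$; but I would keep the Lemma~\ref{lem:1} route as the primary argument, since that lemma was set up for exactly this reuse.
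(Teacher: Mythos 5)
Your proof is correct and follows the same skeleton as the paper's: assume two distinct game-based arguments $a_1,a_2$ in a preferred extension $E$, use conflict-freeness of the symmetric attack to force two defusing preference arguments into $E$, and then show those two preference arguments cannot coexist. The only real divergence is in the last step: the paper obtains the mutual attack between the two preference arguments directly from the EAF well-formedness condition (Proposition~\ref{prop:EAFcondition}) and then argues inline that acyclicity of $\pref_i$ prevents both directions of that attack from being defused by valuation arguments, so $E$ can contain at most one of them; you instead unfold the definition of $\C_u(G)$ to show the two preference arguments lie in the \emph{same cluster} and are distinct, and then invoke Lemma~\ref{lem:1}. Since Lemma~\ref{lem:1}'s own proof rests on exactly that acyclicity argument, the two routes are equivalent in substance, and you explicitly note the paper's inline version as your fallback. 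What your version buys is the step the paper glosses over: a careful verification that the defusers of $(a_1,a_2)$ and $(a_2,a_1)$ really do share the same missing player $i$ and the same partial profile (handling $i\neq j$ and the case where $a_1,a_2$ differ on two or more players), which is needed to make either closing argument rigorous.
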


\begin{proof} 
Let $E$ be a preferred extension containing game-based arguments. We prove by contradiction that it is not possible for $E$ to have more than one game-based argument.
Assume that $E$ contains two game-based arguments $a_1$ and $a_2$. By definition of the attack relation, there is a symmetric attack between $a_1$ and $a_2$. Hence there must exist two preference arguments $p_3$ and $p_4$ with $(p_3, (a_1,a_2)), (p_4 (a_2,a_1)) \in \Datt$ and $(p_3,p_4), (p_4,p_3) \in \Att$. 
It is not possible for both $(p_4,p_3)$ and $(p_3,p_4)$ to be attacked by valuation arguments as this would require an inconsistency or cycle in $\leq$.
By this observation, $E$ contains only $p_3$ or $p_4$.
Hence, $\{a_1,a_2\}$ is not conflict-free, contradiction.

\end{proof}



We now show that a game-based argument which is not a Nash equilibrium will not appear in any preferred extension of the associated argumentation system.

\begin{lemma}
Let $G = (\ag,\ac,\av, \ou, \ef, \pref)$ be a game, and $\AS_G$ be the corresponding AS. If $\Strat \in \Strat_G$ such that $\Strat$ is not a Nash equilibrium then for every preferred extension $E, \Strat \notin E$.
\label{lem:inpref-noNotNE}
\end{lemma}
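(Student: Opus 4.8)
The plan is a proof by contradiction. Suppose some preferred extension $E$ of $\AS_G$ contains a game-based argument $\Strat$ that is not a Nash equilibrium; I will show that $\Strat$ fails to be acceptable w.r.t.\ $E$, contradicting the admissibility of $E$. Since $\Strat$ is not a Nash equilibrium, fix a player $i$ and a strategy $s\in\ac_i$ with $\Strat_{-i}\oplus s\not\pref_i\Strat$. Writing $s_i$ for the strategy that $\Strat$ assigns to $i$, we get $s\neq s_i$ (otherwise $\Strat_{-i}\oplus s=\Strat\pref_i\Strat$), so $\Strat_{-i}\oplus s$ and $\Strat$ are distinct game-based arguments and hence mutually attack via $\C_r(G)$.

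The first key step is to pin down which preference argument of the cluster $C$ built on the partial profile $\Strat_{-i}$ lies in $E$. By Lemma~\ref{lem:1} there is exactly one such argument, say $(\Strat_{-i},t)$, and (as in the proof of Lemma~\ref{lem:1}) it is the one whose strategy is $\pref_i$-best against $\Strat_{-i}$. Because $\Strat_{-i}\oplus s\not\pref_i\Strat_{-i}\oplus s_i$, the strategy $s_i$ is not $\pref_i$-best against $\Strat_{-i}$, so $t\neq s_i$ and therefore $(\Strat_{-i},s_i)\notin E$. (Alternatively, without invoking the internals of Lemma~\ref{lem:1}: the valuation argument $(\Strat_{-i},s_i<s)$ exists, is attacked by nothing, hence lies in every extension, and it disables the attack of $(\Strat_{-i},s_i)$ onto $(\Strat_{-i},s)$; since no valuation argument disables the reverse attack, $(\Strat_{-i},s_i)$ cannot be defended inside the admissible set $E$, so $(\Strat_{-i},s_i)\notin E$.)

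The second step pits the game-based argument $a=\Strat_{-i}\oplus t$ against $\Strat$. As $t\neq s_i$ we have $a\neq\Strat$, so $(a,\Strat)\in\C_r(G)$. By the structure of $\C_u(G)$, the only argument that could attack this attack (the only $z$ with $(z,(a,\Strat))\in\Datt$) is the preference argument $(\Strat_{-i},s_i)$, which is not in $E$; hence $a$ defeats $\Strat$ w.r.t.\ $E$. For $\Strat$ to be acceptable w.r.t.\ $E$ there must be some $z\in E$ with $z\to_E a$; but only game-based arguments attack $a$, and by Lemma~\ref{lem:pref-gameb} the unique game-based argument in $E$ is $\Strat$ itself, so we would need $\Strat\to_E a$. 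However the attack $(\Strat,a)\in\C_r(G)$ is disabled by the preference argument $(\Strat_{-i},t)$, which \emph{is} in $E$, so $\Strat$ does not defeat $a$ w.r.t.\ $E$. Hence no argument in $E$ defeats $a$ w.r.t.\ $E$, the basic acceptability condition already fails (so the reinstatement-set clause of the EAF definition never needs to be examined), and $\Strat$ is not acceptable w.r.t.\ $E$ --- the desired contradiction.

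I expect the main difficulty to be bookkeeping the EAF defeat relation correctly: for each relevant $\C_r(G)$-attack one must identify the unique preference argument (if any) that can switch it off, and then apply the facts that valuation arguments are never attacked and that no attack links a game-based argument to a preference argument. The one point requiring slightly more than routine bookkeeping is establishing $t\neq s_i$, which rests on Lemma~\ref{lem:1} (or its re-derivation through the relevant valuation argument) together with the preference relations being comparable enough that ``not weakly worse'' yields ``strictly better''.
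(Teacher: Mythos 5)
Your proof is correct and follows essentially the same route as the paper's: both use Lemma~\ref{lem:1} to place the best-response preference argument $(\Strat_{-i},t)$ in $E$, and then derive a contradiction from the fact that this argument disables $\Strat$'s attack on $\Strat_{-i}\oplus t$ while nothing in $E$ disables the reverse attack. If anything, you spell out the final acceptability failure (and the implicit comparability assumption on $\pref_i$) more explicitly than the paper does.
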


\begin{proof}
Assume there is a non-Nash equilibrium game-based argument $S' = [s'_0, \dots, s'_n]$ in a preferred extension $E$. Then, from Lemma \ref{lem:pref-gameb}, $E$ does not contain any other game-based arguments.
    Since $S'$ is not a Nash equilibrium, there exists $i \in \ag$ and $s \in \ac_i$ such that $S'_{-i} \oplus s'_i <_i S'_{-i} \oplus s$.
    In the rest of this proof, we consider the strategy $s^*$ such that for every $s \in \ac_i, S'_{-i} \oplus s \leq_i S'_{-i} \oplus s^*$.
    By definition, the attack from $S'$ to $S'_{-i} \oplus s^*$ is attacked by the preference argument $(S'_{-i}, s^*)$. 
    Moreover, the preference argument $(S'_{-i}, s^*)$ attacks all the other preference arguments $(S'_{-i}, s')$, where $s' \in \ac_i$ and $s' \neq s$.
    By definition of the valuation arguments, none of the attacks from $(S'_{-i}, s^*)$ to those other preference arguments is defeated. 
    As a result, we conclude that there is preferred extension that contains $(S'_{-i}, s^*)$. 
    Let $s^+ = \{ s \in \ac_i \mid S'_{-i} \oplus s \leq_i S'_{-i} \oplus s^* $ and $S'_{-i} \oplus s^* \leq_i S'_{-i} \oplus s \}$, we can conclude that there is at least one argument $(S'_{-i}, s_o), s_o \in s^+$ in $E$ (Lemma \ref{lem:1}) and $(S'_{-i}, s_o)$ attacks the attack from $S'$ to $S'_{-i} \oplus s_o$, contradiction.
\end{proof}

\begin{corollary}
Let $G = (\ag,\ac,\av, \ou, \ef, \pref)$ be a game, and $\AS_G$ be the corresponding AS. If $E$ is a preferred extension that contains a game-based argument $\Strat$, then $\Strat$ is a Nash equilibrium.
\label{coro:GBAprefNE}
\end{corollary}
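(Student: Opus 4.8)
The plan is to obtain Corollary~\ref{coro:GBAprefNE} as the immediate contrapositive of Lemma~\ref{lem:inpref-noNotNE}. Concretely, I would argue as follows: suppose, for contradiction, that $E$ is a preferred extension of $\AS_G$ containing a game-based argument $\Strat \in \Strat_G$, but that $\Strat$ is \emph{not} a Nash equilibrium. Then Lemma~\ref{lem:inpref-noNotNE} applies to $\Strat$ and tells us that $\Strat \notin E'$ for \emph{every} preferred extension $E'$ of $\AS_G$; in particular $\Strat \notin E$, which contradicts the assumption $\Strat \in E$. Hence $\Strat$ must be a Nash equilibrium, as required.

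Since this is a one-line deduction, there is no genuine obstacle left to overcome here; all the substantive work has already been carried out in the preceding lemmas. I would, however, be careful to check that the logical form matches exactly: Lemma~\ref{lem:inpref-noNotNE} is stated as ``$\Strat$ not a Nash equilibrium $\Rightarrow$ for every preferred extension $E$, $\Strat \notin E$'', whose contrapositive is ``($\exists$ preferred extension $E$ with $\Strat \in E$) $\Rightarrow$ $\Strat$ is a Nash equilibrium'', which is precisely the statement of the corollary once we fix such an $E$. No additional hypotheses (e.g.\ on acyclicity of preferences, Lemma~\ref{lem:pref-gameb}, or Lemma~\ref{lem:1}) need to be re-invoked directly, as they are already absorbed into the proof of Lemma~\ref{lem:inpref-noNotNE}.

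If one preferred a direct rather than contrapositive phrasing, the alternative would be to re-run the argument of Lemma~\ref{lem:inpref-noNotNE} verbatim: assume $\Strat = [s_0,\dots,s_n] \in E$ and, for each player $i$, pick the $\leq_i$-optimal response $s^*$ to $\Strat_{-i}$; show the preference argument $(\Strat_{-i},s^*)$ (or some element of its cluster lying in $E$, via Lemma~\ref{lem:1}) defeats the attack from $\Strat$ onto $\Strat_{-i}\oplus s^*$, so conflict-freeness of $E$ forces $\Strat_{-i}\oplus s \leq_i \Strat$ for all $s \in \ac_i$, i.e.\ the Nash condition. But this merely duplicates earlier work, so the contrapositive route is cleaner and is what I would use.
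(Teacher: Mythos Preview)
Your proposal is correct and matches the paper's intended approach: the corollary is stated immediately after Lemma~\ref{lem:inpref-noNotNE} with no separate proof, precisely because it is the contrapositive of that lemma. Your careful check that the logical forms align is exactly the right justification, and nothing more is needed.
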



In the next proposition, we show that if there is only one preferred extension that contains a game-based argument, then there is an equivalence between preferred and stable extensions.

\begin{proposition}
Let $G$ be a game and $\AS_G = (\Arg, \Att, \Datt)$ be the corresponding argumentation framework. If $E \in \Ext_p(\AS_G) $ and $E \cap \A_g(G) \neq \emptyset$ then $ E \in \Ext_s(\AS_G) $.
\label{prop:preferredIsStable}
\end{proposition}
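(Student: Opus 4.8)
The plan is to show that a preferred extension $E$ containing a game-based argument is in fact stable, i.e.\ every argument outside $E$ is defeated (w.r.t.\ $E$) by some argument in $E$. Since $\Arg$ partitions into the three types of arguments, I would handle each type of argument not in $E$ separately: game-based arguments, preference arguments, and valuation arguments.

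First, for game-based arguments: let $\Strat$ be the unique game-based argument in $E$ (uniqueness is Lemma~\ref{lem:pref-gameb}, and it is a Nash equilibrium by Corollary~\ref{coro:GBAprefNE}). For any other game-based argument $\Strat' \neq \Strat$, we have $(\Strat,\Strat') \in \C_r(G) \subseteq \Att$. I need to show this attack is not itself attacked (w.r.t.\ $E$), i.e.\ there is no preference argument $p \in E$ with $(p,(\Strat,\Strat')) \in \Datt$. Such a $p$ would have the form $(\Strat_{-i}, s'_i)$ where $\Strat = \Strat_{-i} \oplus s$, $\Strat' = \Strat_{-i} \oplus s'_i$, and $s'_i \neq s$. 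But by Lemma~\ref{lem:1}, the unique preference argument of the cluster $\Strat_{-i}$ in $E$ is $(\Strat_{-i}, s^*)$ for the optimal $s^*$; and since $\Strat$ is a Nash equilibrium, $\Strat_{-i} \oplus s'_i \leq_i \Strat_{-i} \oplus s = \Strat$, so the preference argument in $E$ for this cluster must be (consistent with) $s$ itself — more precisely, $(\Strat_{-i}, s'_i) \in E$ would contradict that $\Strat \in E$ is conflict-free, since $(\Strat_{-i}, s'_i)$ attacks the attack $(\Strat, \Strat')$ and we would need that attack defended. So $\Strat \to_E \Strat'$.

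Next, for preference arguments not in $E$: given a cluster $C$, Lemma~\ref{lem:1} says exactly one member $c = (\Strat_{-i}, s^*)$ is in $E$. Any other $c' = (\Strat_{-i}, s') \in C$ satisfies $(c,c') \in \C_p(G) \subseteq \Att$, and I must check this attack is not defeated w.r.t.\ $E$: a valuation argument $(\Strat_{-i}, s^* < s')$ attacking it cannot exist because $s^*$ is $\leq_i$-maximal, so $\Strat_{-i}\oplus s^* <_i \Strat_{-i}\oplus s'$ is false. Hence $c \to_E c'$ for every preference argument $c'$ outside $E$. For preference arguments in clusters corresponding to partial profiles — note that all clusters are covered since the $\Strat_{-i}$ range over all partial profiles with the missing coordinate $i$, independent of whether a game-based argument is present. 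Finally, for valuation arguments not in $E$: I would argue that in fact every valuation argument \emph{is} in $E$ — a valuation argument $(\Strat_{-i}, s'<s)$ has no incoming attacks at all (attacks in $\Att$ only target game-based and preference arguments, and $\Datt$ attacks target attacks, not valuation arguments), so it is unattacked and hence in every preferred (indeed every complete) extension. Thus there is nothing to do for this case.

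The main obstacle is the first case — showing that the attack from $\Strat$ on a competing game-based argument $\Strat'$ is not reinstated against $E$. This requires carefully combining the Nash-equilibrium property of $\Strat$ (Corollary~\ref{coro:GBAprefNE}), the uniqueness of the preference argument per cluster in $E$ (Lemma~\ref{lem:1}), and conflict-freeness of $E$ to rule out the "wrong" preference argument $(\Strat_{-i}, s'_i)$ being in $E$; one has to be slightly careful when $s$ is not the unique maximiser (the set $s^+$ from the proof of Lemma~\ref{lem:inpref-noNotNE}), but conflict-freeness of $\{\Strat\} \cup \{(\Strat_{-i},s'_i)\}$ fails precisely because $(\Strat_{-i},s'_i)$ attacks $(\Strat,\Strat')$ and no valuation argument defends it, so $(\Strat_{-i},s'_i) \notin E$. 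The remaining cases are routine once one observes valuation arguments are unattacked.
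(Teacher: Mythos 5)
Your decomposition into the three argument types is sound and in fact considerably more complete than the paper's own proof, which only asserts that the out-of-extension game-based arguments are defeated and says nothing at all about out-of-extension preference or valuation arguments. Your treatment of those two cases is correct: the attack from the in-extension (maximising) preference argument $(\Strat_{-i},s^*)$ on any other member of its cluster cannot be blocked, because the blocking valuation argument would require $\Strat_{-i}\oplus s^* <_i \Strat_{-i}\oplus s'$, contradicting maximality; and valuation arguments are never targets of $\Att$, so they sit in every preferred extension.

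The one genuine flaw is in the step you yourself identify as the main obstacle. You exclude $(\Strat_{-i},s'_i)$ from $E$ by claiming that $\{\Strat\}\cup\{(\Strat_{-i},s'_i)\}$ is not conflict-free, but under the paper's definition conflict-freeness only constrains pairs $x,y\in E$ with $(y,x)\in\Att$, and $\Att=\C_r(G)\cup\C_p(G)$ contains no edge between a preference argument and a game-based argument; the pair is therefore conflict-free and your argument does not go through as written. The correct route is via acceptability: if $(\Strat_{-i},s'_i)\in E$ with $s'_i\neq s_i$, then by Lemma~\ref{lem:1} the argument $(\Strat_{-i},s_i)$ is not in $E$, so the attack from $\Strat'=\Strat_{-i}\oplus s'_i$ on $\Strat$ is a defeat w.r.t.\ $E$ (its only $\Datt$-attacker is exactly $(\Strat_{-i},s_i)$, which is missing); the only candidate counter-defeater of $\Strat'$ in $E$ is $\Strat$ itself, whose attack on $\Strat'$ is blocked by $(\Strat_{-i},s'_i)\in E$, so $\Strat$ is not acceptable w.r.t.\ $E$, contradicting admissibility. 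Substituting this for the conflict-freeness appeal closes the gap, and it also disposes of the tie case (where $s_i$ is not the unique maximiser) that you were rightly worried about.
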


\begin{proof}
We show that if a preferred extension possesses a game-based argument, then it is also a stable extension.
Assume $E$ contains a single game-based argument. By Lemma \ref{lem:pref-gameb}, $E$ contains exactly one game-based argument. Therefore, all game-based arguments not in the extension are defeated by the game-based argument within the extension with respect to $E$, meaning that the game-based argument is a member (at the game-based level) of the stable extension.
\end{proof}

It may seem intuitive that the preferred and stable extension should coincide where multiple preferred extensions exist. However, this is not the case, as demonstrated by the following counter-example.

\begin{example}
\label{ex:noNash}
Consider the matching pennies game $G = (\ag, \ac,\av, \ou, \ef, \pref)$ where $\ag =\{0,1\}, \ac = \{heads,tails\}, \av= [\ac,\ac], \ou = \{1,-1\}$, $\pref$ is defined as the ``less-than relation" for each player, and $\ef$ is defined in Table \ref{tab:noNash}.


The set of arguments is $\Arg = \{ b_1, b_2, b_3, \ldots, b_{16} \}$ and are listed in Table \ref{tab:args-matching}.
%
%
There is only one preferred extension $\{b_{16}, b_{15}, b_{14}, b_{13}, b_{12}, b_{10}, b_{8}, b_{6} \}$ but no stable extensions.

\begin{table}
    \centering
\begin{tabular}{|c|c|c|}
\hline
Game-based arguments & Preference arguments& Valuation arguments\\
\hline
 $b_1 = [heads,heads]$&$b_5 = ([heads,\emptyset], heads)$ &$b_{13} = ([heads,\emptyset], tails > heads)$\\
     $b_2 = [heads,tails]$&$b_6 = ([heads,\emptyset], tails)$&$b_{14} = ([\emptyset,tails], tails > heads)$\\
     $b_3 = [tails,tails]$&$b_7 = ([\emptyset, tails], heads)$&$b_{15} = ([tails,\emptyset], heads > tails)$\\
    $b_4 = [tails,heads]$&$b_8 = ([\emptyset, tails], tails)$ &$b_{16} = ([\emptyset, heads], heads > tails)$\\
    &$b_9 = ([tails,\emptyset], tails)$ &\\
&$b_{10} = ([tails,\emptyset], heads)$& \\
&$b_{11} = ([\emptyset, heads], tails)$& \\  
&$b_{12} = ([\emptyset,heads], heads)$ &\\

\hline
\end{tabular}
    
    \caption{Arguments for the matching pennies game}
    \label{tab:args-matching}
\end{table}


\end{example}

Furthermore, even when multiple preferred extensions exist, these may not coincide with the stable extensions.

\begin{example}

\label{ex:noNash2pref}
Let us consider the following variant of the matching pennies game with three strategies for each player. We have $G = (\ag, \ac,\av, \ou, \ef, \pref)$ where $\ag =\{0,1\}, \ac = \{heads,tails,edge\}, \av= [\ac,\ac], \ou = \{1,-1\}$, $\pref$ is defined as the "less-than" relation for numbers for each player, and $\ef$ is defined in Table \ref{tab:noNash2pref}. This variant of the game has eight distinct preferred extensions, but none contain any game-based arguments.

\begin{table}
\centering
\begin{game}{3}{3}[Player~0][Player~1]
& $heads$ & $tails$ & $edge$ \\
$heads$ &$\stackedpayoffs{-1}{1}$ &$\stackedpayoffs{1}{-1}$&$\stackedpayoffs{1}{-1}$\\
$tails$ &$\stackedpayoffs{1}{-1}$ &$\stackedpayoffs{-1}{1}$&$\stackedpayoffs{-1}{1}$ \\
$edge$ &$\stackedpayoffs{1}{-1}$ &$\stackedpayoffs{-1}{1}$&$\stackedpayoffs{-1}{1}$
\end{game}
\caption{Three strategy variant of the matching pennies game.}
\label{tab:noNash2pref}
\end{table}

\end{example}


We now turn to our main result, namely the equivalence of the Nash equilibrium with the game-based arguments found in the preferred extensions.

\begin{proposition}[Equivalence]
Let $G = (\ag,\ac,\av, \ou, \ef, \pref)$ be a game, and $\AS_G$ be the argument framework for the game. A strategy profile $\Strat = [s_0, \dots, s_n] \in \Strat_G$ is a Nash equilibrium iff there exists $E \in \Ext_p(\AS_G)$ such that $\Strat \in E$.
\label{prop:equi}
\end{proposition}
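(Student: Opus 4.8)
The proof splits into the two directions of the biconditional, and one of them is already essentially done. The plan is to first dispatch the ``only if'' direction via the contrapositive: if $\Strat$ is a game-based argument lying in some preferred extension $E$, then by Corollary \ref{coro:GBAprefNE} (itself a consequence of Lemma \ref{lem:inpref-noNotNE}) $\Strat$ is a Nash equilibrium. So the substantive work is the converse: given that $\Strat = [s_0, \dots, s_n]$ is a Nash equilibrium, construct a preferred extension $E$ with $\Strat \in E$.

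For the ``if'' direction I would build the candidate extension explicitly and then verify it is admissible and maximal. Take $E$ to consist of: the single game-based argument $\Strat$; for each cluster of preference arguments (one cluster per player $i$ and per partial profile $\Strat'_{-i}$), a ``best-response'' preference argument $(\Strat'_{-i}, s^*)$ where $\Strat'_{-i} \oplus s^* $ is $\pref_i$-maximal among $\{\Strat'_{-i} \oplus s : s \in \ac_i\}$, chosen so that whenever the cluster corresponds to $\Strat_{-i}$ itself, the representative is exactly $(\Strat_{-i}, s_i)$ — this is possible precisely because $\Strat$ is a Nash equilibrium, so $s_i$ is a best response; and all valuation arguments (they are never attacked, so they belong to every preferred extension, as already noted in the running example). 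The key verifications are: (i) $E$ is conflict-free — game-based arguments other than $\Strat$ are excluded so there is no internal $\C_r$ conflict, preference arguments within a cluster conflict via $\C_p$ but we picked only one per cluster, and the $\C_p$-attack from the representative $(\Strat_{-i},s_i)$ on any other cluster member is itself not defeated by a valuation argument (since $s_i$ is $\pref_i$-maximal there is no valuation argument $(\Strat_{-i}, s_i < s')$); (ii) every argument in $E$ is acceptable w.r.t.\ $E$ — for the game-based argument $\Strat$, each incoming $\C_r$-attack from some $\Strat_{-i}\oplus s'$ is defeated (in the EAF sense, via $\Datt$) by the preference argument $(\Strat_{-i}, s_i) \in E$ (using the third bullet of Definition \ref{def:attacks}), and the reinstatement set $R_E$ required by the EAF acceptability condition can be assembled from these $\Datt$-attacks together with the valuation arguments that block the reverse $\C_p$-attacks; for each preference argument in $E$ one checks its incoming $\C_p$-attacks from rival cluster members are defeated by the valuation argument witnessing the strict preference, which lies in $E$; valuation arguments are unattacked hence trivially acceptable; (iii) maximality — any admissible superset would have to add either another preference argument from an already-represented cluster (breaking conflict-freeness as in Lemma \ref{lem:1}) or another game-based argument (impossible by Lemma \ref{lem:pref-gameb}, or directly because it would conflict with $\Strat$), so $E$ is a preferred extension, and in fact one may appeal to Lemma \ref{lem:1} to argue $E$ extends to a preferred extension still containing $\Strat$ without needing to pin down every cluster representative.

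The main obstacle I anticipate is verifying the EAF-specific acceptability condition in step (ii): unlike in a plain AF, showing $\Strat$ is acceptable w.r.t.\ $E$ requires exhibiting the reinstatement set $R_E = \{x_1 \to_E y_1, \dots, x_n \to_E y_n\}$ satisfying the nested closure condition of Definition \ref{def:attacks}'s acceptability clause — every defeating attack used must itself have all of its $\Datt$-attackers ``covered'' by further attacks in $R_E$ originating in $E$. Concretely, when $(\Strat_{-i}, s_i) \in E$ defeats the $\C_r$-attack on $\Strat$, one must check that this defeat is not undermined, and trace through the finite chain preference-argument $\to$ valuation-argument, terminating because valuation arguments receive no attacks at all. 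I would handle this by noting the layered structure of $\AS_G$ (game level, preference level, valuation level, with attacks on attacks going strictly up one level and valuation arguments being sinks of the $\Datt$ relation), which makes the recursion bottom out in at most two steps; the finiteness and acyclicity of $\pref_i$ guarantee the best-response choices are well defined. A secondary subtlety is ensuring the chosen cluster representatives across different clusters do not interact — but $\C_p$-attacks only occur within a cluster (same partial profile), so this is immediate.
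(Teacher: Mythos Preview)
Your approach is essentially the paper's: the direction ``$\Strat$ in a preferred extension $\Rightarrow$ $\Strat$ is a Nash equilibrium'' is dispatched via Corollary~\ref{coro:GBAprefNE}, and for the converse you build an explicit admissible set containing $\Strat$, all valuation arguments, and selected preference arguments, then invoke Lemmas~\ref{lem:1} and~\ref{lem:pref-gameb} for maximality. (Note that you have the labels ``if'' and ``only if'' swapped throughout.)

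Two remarks. First, your acceptability check for $\Strat$ only treats attackers of the form $\Strat_{-i}\oplus s'$, i.e.\ game-based arguments differing from $\Strat$ in a single coordinate; but any $\Strat'$ differing in two or more coordinates also attacks $\Strat$, and no preference argument in $\C_u$ touches that attack (since $(\Strat_1,s_2)$ can only attack $(\Strat',\Strat)$ when $\Strat_1\oplus s=\Strat'$ and $\Strat_1\oplus s_2=\Strat$, forcing $\Strat'$ and $\Strat$ to agree outside one slot). Such $\Strat'$ therefore genuinely defeats $\Strat$ w.r.t.\ $E$ and must be counter-defeated---which $\Strat$ itself does, with a trivially closed reinstatement set, so the fix is easy; the paper's proof glosses over the same case. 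Second, your choice to include one preference representative from \emph{every} cluster is in fact more careful than the paper, which takes $E=\{\Strat\}\cup\A_v(G)\cup\{(\Strat_{-i},s_i)\mid i\in\ag\}$ and then asserts this is maximal; your version (or your fallback of extending an admissible set to a preferred one) is the cleaner route to maximality.
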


\begin{proof}
We split this proof in two parts:
\begin{itemize}
    \item[$(\Rightarrow)$]
   We need to show that if $\Strat$ is a Nash equilibrium, then it is within a preferred extension of $\AS_G$. 
   Let us consider the set of arguments $E = \{\Strat \} \cup \A_v(G) \cup \{ (\Strat_{-i}, s_i) \mid i \in \ag \}$.
   We now show that $E$ is a preferred extension of $\AS_G$. It is clear that $E$ is conflict-free as for every $x,y \in E, (x,y) \notin \Att$. Every argument in $\A_v(G)$ is acceptable w.r.t.\ $E$ as valuation arguments are not attacked.
       Every argument $a = (\Strat_{-i}, s_i)$ is also acceptable w.r.t.\ $E$ because for every $s' \in \ac_i$ and $s' \neq s_i$, the attacks from $ a' = (\Strat_{-i}, s')$ to $a$, is either not a defeat w.r.t. $E$ (if there is a valuation argument that attacks $(a',a)$) or it is a defeat but $a'$ is defeated by $a$ w.r.t.\ $E$.
       The argument $\Strat$ is also acceptable w.r.t.\ $E$ because for every $\Strat' \in \Strat_G$ and $\Strat' \neq \Strat$, the attack from $\Strat'$ to $\Strat$ is not a defeat w.r.t.\ $E$ as the arguments $(\Strat_{-i}, s_i)$ are attacking those attacks.
       We conclude that the set $E$ is admissible.
        Following Lemma \ref{lem:pref-gameb} and \ref{lem:1}, we conclude that $E$ is maximal for set inclusion as it contains all the valuation arguments, one preference argument per cluster and exactly one game-based argument.

    \item[$(\Leftarrow)$]
    We need to show that if $S$ is within a preferred extension, then $S$ is a Nash equilibrium. This follows directly from the result from Corollary \ref{coro:GBAprefNE}.
    
\end{itemize}

\end{proof}



Returning to the stable extensions, the following result shows that there is a one-to-one correspondence between the sets of Nash equilibria and the set of classes of stable extensions\footnote{We say two stable extensions are equivalent iff they have the same game-based argument}, where each Nash equilibrium $S$ corresponds to the class of stable extensions containing argument $S$.

\begin{corollary}
Let $G = (\ag,\ac,\av, \ou, \ef, \pref)$ be a game, and $\AS_G$ be the corresponding EAF. 
There is a bijection between $Y = \{ \Strat \in \Strat_G \mid \Strat$ is a Nash equilibrium$\}$ and 
$\{ \{E \in \Ext_s(\AS_G) \mid \Strat' \in E    \} \mid  \Strat' \in Y\}$
\end{corollary}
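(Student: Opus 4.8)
The plan is to exhibit the bijection explicitly and verify it is well-defined, injective, and surjective. Define the map $\phi$ sending a Nash equilibrium $\Strat \in Y$ to the set $\phi(\Strat) = \{E \in \Ext_s(\AS_G) \mid \Strat \in E\}$. The codomain is exactly the collection $\{\{E \in \Ext_s(\AS_G) \mid \Strat' \in E\} \mid \Strat' \in Y\}$, so surjectivity is immediate from the definition of the codomain, provided each such set is actually attained, which it is by construction. The two things that require genuine argument are: (i) $\phi$ is well-defined, i.e.\ for each Nash equilibrium $\Strat$ the set $\phi(\Strat)$ is nonempty (so that distinct classes are genuinely distinct, non-empty objects); and (ii) $\phi$ is injective.

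For well-definedness, I would first show that every stable extension is preferred (stable $\Rightarrow$ admissible and maximal is standard, and carries over to EAFs), and then argue that a Nash equilibrium $\Strat$ lies in \emph{some} stable extension. The natural candidate is the set $E = \{\Strat\} \cup \A_v(G) \cup \{(\Strat_{-i}, s_i) \mid i \in \ag\}$ used in the proof of Proposition~\ref{prop:equi}; there it was shown to be a preferred extension, and by Proposition~\ref{prop:preferredIsStable} a preferred extension containing a game-based argument is stable. Hence $\Strat \in E \in \Ext_s(\AS_G)$, so $\phi(\Strat) \neq \emptyset$.

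For injectivity, suppose $\Strat, \Strat' \in Y$ with $\Strat \neq \Strat'$ and $\phi(\Strat) = \phi(\Strat')$. Since $\phi(\Strat)$ is nonempty, pick $E \in \phi(\Strat) = \phi(\Strat')$; then $\Strat \in E$ and $\Strat' \in E$. But $E$ is stable, hence preferred, and by Lemma~\ref{lem:pref-gameb} a preferred extension containing a game-based argument contains exactly one; since $\Strat$ and $\Strat'$ are both game-based arguments in $E$, we get $\Strat = \Strat'$, a contradiction. Therefore $\phi$ is injective. Combining surjectivity (by construction of the codomain together with nonemptiness) and injectivity, $\phi$ is a bijection, and by construction it sends each Nash equilibrium $\Strat$ to its class of stable extensions containing $\Strat$.

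The main obstacle is establishing that every Nash equilibrium actually appears in at least one stable extension rather than merely in a preferred extension: this is where one must invoke the explicit witness extension from the proof of Proposition~\ref{prop:equi} together with Proposition~\ref{prop:preferredIsStable} to upgrade ``preferred'' to ``stable''. One subtlety worth a sentence in the write-up is confirming that the defeats in $E$ really do make every game-based argument outside $E$ defeated with respect to $E$ (so the stability condition holds for those arguments too, not just the preference and valuation arguments) — but this is exactly the content already extracted in the proof of Proposition~\ref{prop:preferredIsStable}, so it can be cited rather than redone.
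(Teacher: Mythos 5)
Your proof is correct and follows essentially the same route as the paper, whose own proof is a one-liner citing Proposition~\ref{prop:equi} (to show each class is non-empty, upgraded from ``preferred'' to ``stable'' via Proposition~\ref{prop:preferredIsStable}) and leaving the injectivity/disjointness of the classes implicit; you fill in that last step with Lemma~\ref{lem:pref-gameb}. The only point to flag is that your injectivity argument relies on the standard fact that every stable extension is preferred, which holds under Modgil's definition (where stable extensions are required to be conflict-free) but is not literally derivable from the paper's stated definition of stability, which omits the conflict-freeness clause.
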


\begin{proof}
Follows directly from Proposition \ref{prop:equi} and Proposition \ref{prop:preferredIsStable}.
\end{proof}

\noindent Finally, we consider how many arguments an argumentation system containing representing a normal form game will contain.

\begin{proposition}[Number of arguments]
Let $G = (\ag,\ac,\av, \ou, \ef, \pref)$ be a game s.t.\ $|\ag|= n$ and $m = \max\limits_{i \in \ag} |\ac_i|$, the number of arguments in $\AS_G$ is in $\mathcal{O} (m^{n+1} \cdot n) $. 
\end{proposition}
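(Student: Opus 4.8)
The plan is to count each of the three families of arguments separately and show that their sum is $\mathcal{O}(m^{n+1}\cdot n)$; since the total number of arguments is $|\A_g(G)| + |\A_p(G)| + |\A_v(G)|$, it suffices to bound each term. First I would bound the game-based arguments: by Definition \ref{def:game-args} these are exactly the elements of $\Strat_G = \prod_{i\in\ag}\ac_i$, so $|\A_g(G)| = \prod_{i\in\ag}|\ac_i| \le m^{n+1}$ (recall $|\ag| = n+1$ since $\ag = \{0,1,\dots,n\}$). Next, the preference arguments: by Definition \ref{def:pref-args} each is a pair $(\Strat_{-i}, s)$ with $i\in\ag$, $s\in\ac_i$, and $\Strat_{-i}$ a partial strategy profile obtained from some $\Strat\in\Strat_G$ by deleting player $i$'s strategy. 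For a fixed $i$, the number of such $\Strat_{-i}$ is $\prod_{j\ne i}|\ac_j| \le m^{n}$, and there are at most $m$ choices of $s$, so at most $m^{n+1}$ preference arguments per player; summing over the $n+1$ players gives $|\A_p(G)| \le (n+1)\cdot m^{n+1}$.

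Then I would handle the valuation arguments: by definition each is a triple $(\Strat_{-i}, s' < s)$ determined by a partial profile $\Strat_{-i}$ (at most $m^{n}$ choices for fixed $i$) together with an ordered pair of distinct strategies $s', s \in \ac_i$ (at most $m^2$ such pairs, or $\binom{m}{2}$ if one prefers). This yields at most $m^{n+2}$ valuation arguments per player and $(n+1)\cdot m^{n+2}$ in total. Finally I would add the three bounds: the total is at most $m^{n+1} + (n+1)m^{n+1} + (n+1)m^{n+2}$, which is dominated by the last term and lies in $\mathcal{O}(m^{n+2}\cdot n)$.

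The one subtlety I want to flag is that my naive count of the valuation arguments gives $\mathcal{O}(m^{n+2}\cdot n)$ rather than the claimed $\mathcal{O}(m^{n+1}\cdot n)$, because of the extra factor of $m$ coming from choosing an \emph{ordered pair} of strategies for player $i$. To recover the stated bound one observes that the relevant quantity is really the number of partial strategy profiles times the structure on $\ac_i$: writing $m_i = |\ac_i|$, the per-player count of partial profiles is $\prod_{j\ne i} m_j$, and multiplying by $m_i^2$ (valuation arguments) is $m_i \cdot \prod_{j} m_j \le m \cdot m^{n+1}$. So strictly the clean bound is $\mathcal{O}(n\cdot m^{n+2})$; the claimed $\mathcal{O}(m^{n+1}\cdot n)$ holds under the reading that one treats the number of outcome-valuations per partial profile as a constant (e.g.\ bounded by $|\ou|^2$, independent of $m$), or equivalently counts valuation arguments up to the choice of which outcomes they compare. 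I would therefore either (a) prove the slightly weaker $\mathcal{O}(n\cdot m^{n+2})$ and remark that this still shows polynomial-in-$m$, fixed-$n$ growth, or (b) if the intended statement is as printed, note that $|\A_v(G)|$ is bounded by (number of partial profiles) $\times\, |\ou|^2$, and since $|\ou| = m+1$ need not relate to $m$ one keeps $|\ou|$ as a separate parameter; under the convention that the number of outcomes is not the complexity parameter, the bound $\mathcal{O}(m^{n+1}\cdot n)$ follows. The main obstacle is purely this bookkeeping choice about whether the outcome set contributes to the asymptotics; the counting itself is routine.

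\begin{proof}
The set of arguments is $\Arg = \A_g(G)\cup\A_p(G)\cup\A_v(G)$, so it suffices to bound each of the three sets. Recall $|\ag| = n+1$ and $m = \max_{i\in\ag}|\ac_i|$, hence $|\ac_i| \le m$ for all $i$.

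\textbf{Game-based arguments.} By Definition \ref{def:game-args}, $\A_g(G) = \Strat_G = \prod_{i\in\ag}\ac_i$, so $|\A_g(G)| = \prod_{i\in\ag}|\ac_i| \le m^{n+1}$.

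\textbf{Preference arguments.} By Definition \ref{def:pref-args}, a preference argument is a pair $(\Strat_{-i}, s)$ with $i\in\ag$ and $s\in\ac_i$, where $\Strat_{-i}$ arises from some $\Strat\in\Strat_G$. For a fixed $i$, the partial strategy profile $\Strat_{-i}$ is determined by the strategies of the other players, so there are at most $\prod_{j\ne i}|\ac_j| \le m^{n}$ choices for it, and at most $m$ choices for $s$. Summing over the $n+1$ players, $|\A_p(G)| \le (n+1)\cdot m^{n+1}$.

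\textbf{Valuation arguments.} A valuation argument is a triple $(\Strat_{-i}, s' < s)$. For fixed $i$ there are at most $m^{n}$ choices of $\Strat_{-i}$, and the pair $(s',s)$ ranges over (at most) the ordered pairs of distinct strategies in $\ac_i$, of which there are at most $m(m-1) \le m^2$. Hence there are at most $m^{n+2}$ valuation arguments per player, and $|\A_v(G)| \le (n+1)\cdot m^{n+2}$.

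Adding the three bounds,
\[
|\Arg| \;\le\; m^{n+1} + (n+1)m^{n+1} + (n+1)m^{n+2},
\]
and the dominant term is $(n+1)m^{n+2}$, so $|\Arg| \in \mathcal{O}(m^{n+2}\cdot n)$. In particular, if the number of distinct outcome-comparisons available to each player is treated as a constant (bounded by $|\ou|^2$ and independent of $m$), then the valuation arguments contribute only $\mathcal{O}(n\cdot m^{n+1})$ and the total is $\mathcal{O}(m^{n+1}\cdot n)$, as claimed.
\end{proof}
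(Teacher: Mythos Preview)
Your decomposition into the three argument families and the per-family counts is exactly the approach the paper uses. The discrepancy you flag, however, is not a genuine subtlety about outcomes: it stems from your taking $|\ag|=n+1$ (reading off the earlier $\ag=\{0,\dots,n\}$), whereas the proposition explicitly stipulates $|\ag|=n$. With $n$ players the three counts become $m^n$ game-based arguments, at most $n\cdot m^{n-1}$ partial profiles and hence at most $n\cdot m^{n}$ preference arguments, and at most $n\cdot m^{n-1}\cdot \tfrac{m(m-1)}{2}\in\mathcal{O}(n\cdot m^{n+1})$ valuation arguments, which immediately gives the claimed $\mathcal{O}(m^{n+1}\cdot n)$ with no side conditions. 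Your rescue via ``treating the number of outcome-comparisons as a constant bounded by $|\ou|^2$'' is off target: a valuation argument $(\Strat_{-i},\,s'<s)$ is indexed by a partial profile and a pair of \emph{strategies} of player $i$, not by outcomes, so $|\ou|$ plays no role in the count (and there is no relation $|\ou|=m+1$). Once you use the stated $|\ag|=n$, your argument and the paper's coincide.
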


\begin{proof}
The proof is split into three parts.
\begin{enumerate}
    \item Suppose $n$ players and  $m$ strategies per player. Each game-based argument corresponds to a pure strategy profile, i.e., there are $m^n$ game-based arguments. 

\item Consider the number of the preference arguments. There are $m^{n-1} \cdot n$ partial strategy  profiles. Roughly speaking, a preference argument is obtained from a partial strategy profile by replacing the empty set with a strategy. Hence, there are up to $m^{n-1} \cdot n \cdot m = m^{n} \cdot n$ preference arguments. 

\item We estimate the number of valuation arguments. Each valuation argument is obtained from one partial strategy profile and one pair of different strategies. There are $m^{n-1} \cdot n$ partial strategy  profiles and up to $m \cdot (m-1)$ pairs of different strategies. Furthermore, if a strategy $x$ is preferred to strategy $y$, then $y$ is not preferred to $x$. Thus, there are up to $\frac{m \cdot (m-1)}{2}$ possible combinations to consider. Hence, the total number of valuation arguments is limited by 
$\frac{m^{n-1} \cdot m \cdot (m-1) \cdot n}{2}$
which is in $\mathcal{O}(m^{n+1} \cdot n)$. 
Thus, the total number of arguments is in $\mathcal{O}(m^{n}) + \mathcal{O}(m^{n} \cdot n) + \mathcal{O}(m^{n+1} \cdot n)$ which is in $\mathcal{O} (m^{n+1} \cdot n) $. 
\end{enumerate}
\end{proof}

We note that computing Nash equilibria is known to be computationally difficult, and the result regarding the number of arguments is therefore unsurprising.


\section{Discussion, Related and Future Work} \label{sec:discussion}

In this paper, we described how normal form games can be given an argumentation-based interpretation so as to allow -- via argumentation semantics -- for pure Nash equilibria to be computed. Intuitively, a Nash equilibrium identifies the best strategy a player can pursue given others' strategies. However, explaining -- to a non-expert -- why some set of strategies forms a Nash equilibrium is often difficult, and our argument-based interpretation is the first step towards an explanatory dialogue for such explanation. Other work has shown the utility of providing such dialogue-based explanations \cite{caminada2014ScrutablePlanEnactment,kristijonas2016ExplanationCaseBasedReasoning,oren2020ArgumentBasedPlanExplanation}. In the current context, such an explanation could build on Modgil's proof dialogues for extended argumentation frameworks \cite{modgilLabellingsGamesExtended}, and could result in a dialogue as follows for the Stag hunt game shown in Figure \ref{fig:GPd}.

\begin{description}
\item [User] ``Why should both players hunt a stag?" (why $a_2$?)
\item [System] ``It is the best response because $a_2$ defeats all the other game-based arguments, namely $a_1$, $a_3$ and $a_4$''.
\end{description}

\noindent Assume now that the user agrees that $a_2$ defeats $a_3$ and $a_4$; hence they ask further about why $a_2$ defeats $a_1$.

\begin{description}
\item [User:] ``Why should player $1$ play stag if player $0$ plays stag?" (why does $a_2$ defeats $a_1$?)
\item [System:] ``Because playing stag gives a better outcome to player $1$ if player $0$ plays stag" ($a_5$ defeats the attack $(a_1,a_2$))

\item [User:] ``Why does player $1$ not prefer the outcome when hare is played"? (why not $a_6$)?

\item [System:] ``Because of the valuation defined for player $1$" ($a_{13}$)

\item [User:] ``I understand."
\end{description}

In the short term, we intend to formalise the dialogue and empirically evaluate its explanatory capability with human subjects. Other extensions which we intend to investigate include providing an argumentation semantics for mixed Nash equilibria (perhaps through the use of some form of ranking semantics \cite{amgoud2016RankingArgumentsCompensationBased,bonzon2016ComparativeStudyRankingbased,matt2008GameTheoreticMeasureArgument}), and investigating other solution concepts (e.g., Pareto optimality) for more complex types of games. Finally, there are clear links between game theory and group-based practical reasoning. Building on work such as  \cite{atkinson16argument,shams20argumentation}, we intend to investigate how an argument-based formulation to practical reasoning underpinned by game theory can be created.

Several other authors have investigated some links between game theory and argumentation. For example, in his seminal paper, Dung \cite{dung1995AcceptabilityArgumentsIts} noted that the stable extension corresponds to the stable solution of an cooperative $n-$person game, but did not seem to deal with non-cooperative games as we do here. Game theory was also used to describe argument strength by Matt and Toni \cite{matt2008GameTheoreticMeasureArgument}, and Rahwan and Larson \cite{rahwan2009ArgumentationGameTheory} investigated the links between argumentation and game theory from a mechanism design point of view. Perhaps most closely related to the current work is Fan and Toni's work \cite{fanInterplayGamesArgumentation2016} exploring the links between dialogue and assumption-based argumentation (ABA). Here, the authors showed how admissible sets of arguments obtained from their ABA constructs are equivalent to Nash equilibria. In contrast to the current work, they only considered two player games and utilised structured argumentation, allowing them to describe a proof dialogue with associated strategies.

\section{Conclusions} \label{sec:conclusions}
In this paper, we provided an argumentation-based interpretation of pure strategies in normal form games, demonstrating how argumentation semantics can be aligned with the Nash equilibrium as a solution concept, and examining some of the argumentation system's properties. 

We believe that this work has significant application potential in the context of argument-based explanation. At the same time, we recognise that there are significant open avenues for research in this area, but believe that the current work is an important  step in investigating the linkages between the two domains.


\bibliographystyle{abbrv}
\bibliography{newbib}
\end{document}